\newtheorem{example}{Example}
\newtheorem{theorem}{Theorem}
\newtheorem{corollary}{Corollary}[theorem]
\newtheorem{lemma}[theorem]{Lemma}
\newtheorem{remark}{Remark}
\title{A Novel Bayes' Theorem for Upper Probabilities}
\author[1]{\href{mailto:<caprio@seas.upenn.edu>?Subject=Your E-pi UAI 2023 paper}{Michele Caprio}{}}
\author[2,3]{Yusuf Sale}
\author[2,3]{Eyke Hüllermeier}
\author[1]{Insup Lee}
\affil[1]{%
    PRECISE Center\\
    Department of Computer and Information Science\\
    University of Pennsylvania\\
    USA
  }
\affil[2]{%
Institute of Informatics\\
University of Munich (LMU)\\
Germany
}
\affil[3]{%
    Munich Center for Machine Learning\\
    Germany

}
\begin{document}
\maketitle

\begin{abstract}
    In their seminal 1990 paper, Wasserman and Kadane establish an upper bound for the Bayes' posterior probability of a measurable set $A$,  when the prior lies in a class of probability measures $\mathcal{P}$ and the likelihood is precise. They also give a sufficient condition for such upper bound to hold with equality. In this paper, we introduce a generalization of their result by additionally addressing uncertainty related to the likelihood. We give an upper bound for the posterior probability when both the prior and the likelihood belong to a set of probabilities. Furthermore, we give a sufficient condition for this upper bound to become an equality. This result is interesting on its own, and has the potential of being applied to various fields of engineering (e.g. model predictive control), machine learning, and artificial intelligence.

    
\end{abstract}

\section{Introduction}\label{intro}

Bayes' rule (BR) is arguably the best-known mechanism to update subjective beliefs. It prescribes the agent to elicit a prior distribution that encapsulates their initial opinion, and to come up with a likelihood that describes the data generating process. Combining prior and likelihood via BR produces the posterior distribution, which captures the agent's revised opinion in light of the collected data. 

But what happens if the agent is not able to specify a single prior distribution? This may occur if they face \textit{ambiguity} \citep{ellsberg,gilboa2}. In \cite[Section 1.1.4]{walley} and \cite{prob.kin}, the authors point out that missing information and bounded rationality may prevent the agent from assessing probabilities precisely in practice, even if doing so is possible in principle. This may be due to the lack of information on how likely events of interest are,  lack of computational time or ability, or because it is extremely difficult to analyze a complex body of evidence. Similarly, the agent may face difficulties in gauging the data generating process, so specifying a single likelihood may become a challenging task. 

The notion of ambiguity is strictly related with that of epistemic uncertainty in machine learning (ML) and artificial intelligence (AI). Let us illustrate this clearly by borrowing concepts from \cite[Section 3.2]{ibnn}. Epistemic uncertainty (EU) corresponds to reducible uncertainty caused by a lack of knowledge about the best model. Notice how, in the precise case -- that is, when the agent specifies a single distribution -- EU is absent. In many applications, a single probability measure is only able to capture the idea of irreducible uncertainty, since it represents a case in which the agent knows exactly the true data generating process, and the prior probability that perfectly encapsulates their initial beliefs. This is a well-studied property of sets of probabilities \citep[Page 458]{eyke}. Due to the increasing relevance of reliable and trustworthy ML and AI applications, effective uncertainty representation and quantification have become vital research areas \citep{kendall2017uncertainties,smith2018understanding, depeweg2018decomposition, kapoor2022uncertainty,sale2023volume,wimmer2023quantifying}. Thus, theoretic underpinnings of imprecise probability theories emerge as a valuable methodology for improving the representation and quantification of uncertainties. Adopting concepts like (convex) sets of probabilities and upper and lower probabilities foster a more sophisticated and fine-tuned articulation of uncertainty.

We remark that EU should not be confused with the concept of  \textit{epistemic probability} \citep{definetti1,definetti2,walley}. In the subjective probability literature, epistemic probability can be captured by a single distribution. Its best definition can be found in \cite[Sections 1.3.2 and 2.11.2]{walley}. There, the author specifies how epistemic probabilities model logical or psychological degrees of partial belief of the agent. We remark, though, how de Finetti and Walley work with finitely additive probabilities, while in this paper we use countably additive probabilities.

The field of imprecise probabilities \citep{augustin,walley}, and in particular the classic paper by \cite{wasserman}, and successive works like \cite{cozman,larry,giaco,kali}, study the problem of an agent updating their beliefs using BR in the presence of ambiguity. 
Our paper belongs to this body of work. Our main result, Theorem \ref{main1}, generalizes the theorem in \cite[Section 2]{wasserman} to the case where the agent faces ambiguity on both what prior and what likelihood to choose to model the phenomenon of interest. We find the upper posterior, that is, the ``upper bound'' to the set of posteriors, using only the upper prior and the upper likelihood. Thanks to the conjugacy property of upper probabilities, introduced in the next section, we derive the lower posterior, that is the ``lower bound'' to the set of posteriors. We also give a necessary condition for the bound to hold with equality. In addition, we hint at possible applications, in particular in the field of model predictive control, a method of process control that is used to control a process while satisfying a set of constraints \cite{hb_mpc}.

The paper is divided as follows. Section \ref{prelim} introduces the concepts that are needed to understand our result.
In Section \ref{result_fixed} we present the main theorem, and conclude our work Section \ref{concl}. We prove our results in Section \ref{proof}.
%
%
\section{Bayes' Theorem for Upper Probabilities}\label{theory_portion}
\subsection{Preliminaries}\label{prelim}
In this section, we introduce the background notions from the IP literature \citep{augustin,decooman,walley} that are needed to understand our main results.

Call $\Delta(\Omega,\mathcal{F})$ the space of (countably additive) probability measures on a generic measurable space $(\Omega,\mathcal{F})$. Pick a generic set $\mathcal{P} \subset \Delta(\Omega,\mathcal{F})$. We denote by $\overline{P}$ the \textit{upper probability}  associated with $\mathcal{P}$, that is, $\overline{P}(A)=\sup_{P\in\mathcal{P}}P(A)$, for all $A\in\mathcal{F}$. Its conjugate is called \textit{lower probability}, $\underline{P}(A)=1-\overline{P}(A^c)=\inf_{P\in\mathcal{P}}P(A)$, for all $A\in\mathcal{F}$. Because of the conjugacy property, in the remainder of this document we focus on upper probabilities only. 

We say that upper probability $\overline{P}$ is \textit{concave} or \textit{$2$-alternating} if $\overline{P}(A\cup B) \leq \overline{P}(A)+\overline{P}(B)-\overline{P}(A\cap B)$, for all $A,B\in\mathcal{F}$. Upper probability $\overline{P}$ is \textit{compatible} \citep{gong} with the set 
\begin{align}
   \text{core}(\overline{P}):=\{&P \in \Delta(\Omega,\mathcal{F}): P(A) \leq \overline{P}(A), \forall A \in \mathcal{F}\} \nonumber\\ 
   =\{&P \in \Delta(\Omega,\mathcal{F}): \overline{P}(A) \geq P(A) \geq \underline{P}(A), \nonumber \\  &\forall A \in \mathcal{F}\} \label{core1}
\end{align}
where \eqref{core1} is a characterization \citep[Page 3389]{cerreia}. The core is the set of all (countably additive) probability measures on $\Omega$ that are set-wise dominated by $\overline{P}$. It is convex: it is immediate to see that if $P$ and $Q$ are dominated by $\overline{P}$, then $\gamma P$ and $(1-\gamma)Q$ are dominated by $\gamma\overline{P}$ and $(1-\gamma)\overline{P}$, respectively, for all $\gamma\in [0,1]$. In turn, $\gamma P+(1-\gamma)Q$ is dominated by $\overline{P}$, thus giving the desired convex property. In addition, throughout the present work, we assume that $\text{core}(\overline{P})$ is nonempty and weak$^\star$-closed.\footnote{Recall that in the weak$^\star$ topology, a net $(P_\alpha)_{\alpha \in I}$ converges to $P$ if and only if $P_\alpha(A) \rightarrow P(A)$, for all $A \in \mathcal{F}$. See also results presented in \cite[Appendix D3]{walley}} Then, as a result of \cite[Section 3.6.1]{walley}, it is also weak$^\star$-compact. 


\begin{remark}
    In \cite[Section 3.6.1]{walley}, the author shows that the finitely additive core is weak$^\star$-compact. The latter is defined as the set of all \textit{finitely additive} probabilities that are set-wise dominated by $\overline{P}$. It is a superset of the countably additive core in \eqref{core1}. To see this, notice that, in general, there might well be a probability measure that is set-wise dominated by $\overline{P}$, but that is merely finitely additive. In fact, there might even be no countably additive probabilities that are set-wise dominated by $\overline{P}$. For this reason, we have to assume that the countably additive core in \eqref{core1} is nonempty. If we further require that the countably additive core in \eqref{core1} is weak$^\star$-closed, then, this implies that it is a weak$^\star$-closed subset of a weak$^\star$-compact space. By \cite[Theorem 2.35]{rudin2}, closed subsets of compact sets are compact. In turn, we have that if the countably additive core is weak$^\star$-closed, it is also weak$^\star$-compact. 
\end{remark}

We now present a class of probabilities that (i) is well-studied and used in robust statistics \cite{huber}, and (ii) is the core of a concave upper probability. 
Other classes with similar properties are presented in \cite[Examples 3-7]{wasserman}. 

\begin{example}[$\varepsilon$-contaminated class]\label{ex_cont}
Consider the space $\Delta(\Omega,\mathcal{F})$ of probability measures on a generic measurable space $(\Omega,\mathcal{F})$, and assume $\Omega$ is compact. Pick any $P\in\Delta(\Omega,\mathcal{F})$ and any $\varepsilon\in [0,1]$. Define
\begin{align}\label{eps_cont_ex}
    \mathcal{Q}^\text{co}:=\{Q\in\Delta(\Omega,\mathcal{F}) : Q(A)=(1-\varepsilon)P(A)+ \varepsilon R(A), \nonumber \\ \forall A\in\mathcal{F} \text{, } R \in \Delta(\Omega,\mathcal{F})\}. 
\end{align}
Superscript ``co'' stands for convex and core. $\mathcal{Q}^\text{co}$ is the $\varepsilon$-contaminated class induced by $P$; it was studied in \cite[Example 3]{wasserman} and references therein. We have that $\overline{Q}(A)=(1-\varepsilon)P(A)+\varepsilon$, for all $A\in\mathcal{F}\setminus\{\emptyset\}$ and $\underline{Q}(A)=(1-\varepsilon)P(A)$, for all $A\in\mathcal{F}\setminus\{\Omega\}$. In addition, $\mathcal{Q}^\text{co}=\text{core}(\overline{Q})$, and $\overline{Q}$ is concave. 
\end{example}

The $\varepsilon$-contaminated class is also instrumental for a future application of Theorem \ref{main1} to model predictive control. We will discuss this at length at the end of section \ref{result_fixed}. There, we also explain why it is important to account for the ambiguity in the likelihood model in real-world safety-critical scenarios.

\subsection{A novel Bayes' Theorem for Upper Probabilities}\label{result_fixed}

Let $(\Theta,\mathcal{F})$ be the measurable parameter space of interest and $\Delta(\Theta,\mathcal{F})$ the space of (countably additive) probability measures on $(\Theta,\mathcal{F})$. 
Let $\mathcal{Y}$ be the set of all \textit{bounded}, \textit{non-negative}, $\mathcal{F}$-measurable functionals on $\Theta$. Call $\mathscr{D}$ the sample space endowed with sigma-algebra $\mathcal{A}$. That is, for any random variable $Y$ of interest and all $\theta\in\Theta$, $Y(\theta)=y\in\mathscr{D}$. Let the agent elicit a set of probabilities $\mathcal{L}_\theta:=\{P_\theta\in\Delta (\mathscr{D},\mathcal{A}) : \theta \in\Theta\}$ on $\mathscr{D}$, parameterized by $\theta\in\Theta$. This captures the ambiguity faced by the agent in determining the true data generating process \citep{ellsberg,gilboa2}. We write $P_\theta \equiv P(\cdot \mid \theta)$ for notational convenience. Assume that each $P_\theta\in\mathcal{L}_\theta$ has density $L(\theta)=p(y\mid \theta)$ with respect to some sigma-finite dominating measure $\nu$ on $(\mathscr{D},\mathcal{A})$; this represents the likelihood function for $\theta$ having observed data $y\in \mathscr{D}$.

\textbf{Assumption 1.} Every density $L$ corresponding to an element $P_\theta$ of $\mathcal{L}_\theta$ belongs to $\mathcal{Y}$; that is, every density is bounded and non-negative.

Assumption 1 is needed mainly for mathematical purposes; as we shall see later in this section, it can be relaxed. Let the agent specify a set $\mathcal{P}$ of probabilities on $(\Theta,\mathcal{F})$. It represents their (incomplete) prior knowledge on the elements of $\mathcal{F}$; its elements may be informed by the collected data, thus giving the analysis an (imprecise) empirical Bayes flavor \citep{casella}. Then, compute $\overline{P}(A)=\sup_{P\in\mathcal{P}}P(A)$, for all $A\in\mathcal{F}$, and consider $\mathcal{P}^{\text{co}}:=\text{core}(\overline{P})$, assumed nonempty and weak$^\star$-closed. It represents the agent's initial beliefs. We assume that every $P\in\mathcal{P}^{\text{co}}$ has density $p$ with respect to some sigma-finite dominating measure $\mu$ on $(\Theta,\mathcal{F})$, that is, $p={\text{d}P}/{\text{d}\mu}$. We require the agent's beliefs to be represented by the core for two main reasons. The first, mathematical, one is to ensure that the upper probability is compatible with the belief set. The second, philosophical, one is the following. In Bayesian statistics, the agent selects a specific prior to encapsulate their initial beliefs. \cite{berger} points out how such choice is oftentimes arbitrary, and posits the \textit{dogma of ideal precision} (DIP). It states that in any problem there is an \textit{ideal probability model} $P_T$ which is precise, but which may not be precisely known. To overcome this shortcoming, the agent should specify a finite collection $\{P_s\}_{s\in\mathcal{S}}$ of ``plausible'' prior distributions, and compute the posterior for each $P_s$. Notice how this corresponds to selecting a finite number of elements from the core of $\overline{P}_\mathcal{S}$, where $\overline{P}_\mathcal{S}(A)=\sup_{s\in\mathcal{S}}P_s(A)$, for all $A\in\mathcal{F}$. A criticism to the DIP was brought forward by Walley. In \cite[Section 2.10.4.(c)]{walley}, he claims how given an upper probability $\overline{P}$, there is no cogent reason for which the agent should choose a specific $P_T$ that is dominated by $\overline{P}$, or -- for that matter -- a collection $\{P_s\}_{s\in\mathcal{S}}$ of ``plausible'' probabilities. Because the core considers all (countably additive) probability measures that are dominated by $\overline{P}$, it is the perfect instrument to address Walley's criticism \citep{prob.kin}.

Let the agent compute $\overline{P}_\theta$, the upper probability associated with $\mathcal{L}_\theta$, and consider $\mathcal{L}^{\text{co}}_\theta:=\text{core}(\overline{P}_\theta)$, assumed nonempty and weak$^\star$-closed. It represents the set of plausible likelihoods. As \cite{grun} point out, accounting for ambiguity around the true data generating process is crucial, as Bayesian inference may suffer from inconsistency issues if carried out using a misspecified likelihood.

Let 
\begin{equation}\label{L_scr}
    \mathscr{L}:=\left\lbrace{L=\frac{\text{d}P_\theta}{\text{d}\nu} \text{, } P_\theta \in \mathcal{L}_\theta^{\text{co}}}\right\rbrace \subset \mathcal{Y}
\end{equation}
be the set of pdf/pmf associated with the elements of $\mathcal{L}^{\text{co}}_\theta$. Let also  $\overline{L}(\theta):=\sup_{L\in\mathscr{L}}L(\theta)$ and $\underline{L}(\theta):=\inf_{L\in\mathscr{L}}L(\theta)$, for all $\theta\in\Theta$. Call 
\begin{align*}
    \mathcal{P}^{\text{co}}_y:=\Bigg\{ P_y &\in \Delta(\Theta,\mathcal{F}) : \\
    \frac{\text{d}P_y}{\text{d}\mu}&=p(\theta\mid y)=\frac{L(\theta)p(\theta)}{\int_\Theta L(\theta)p(\theta) \text{d}\theta}\text{, }\\  
    p&=\frac{\text{d}P}{\text{d}\mu} \text{, } P\in\mathcal{P}^{\text{co}} \text{, } L=\frac{\text{d}P_\theta}{\text{d}\nu} \text{, } P_\theta\in\mathcal{L}_\theta^{\text{co}}\Bigg\}
\end{align*}
the class of posterior probabilities when the prior is in $\mathcal{P}^{\text{co}}$ and the likelihood is in $\mathcal{L}_\theta^{\text{co}}$, and let $\overline{P}_y(A)=\sup_{P_y\in\mathcal{P}^{\text{co}}_y}P_y(A)$, for all $A\in\mathcal{F}$. Then, the following is a generalization of Bayes' theorem in \cite[Section 2]{wasserman}, and is an extension of \cite[Theorem 7]{ibnn}. We prove it in Section \ref{proof}.

\begin{theorem}[Bayes' theorem for upper probabilities]\label{main1}
Suppose $\mathcal{P}^{\text{co}},\mathcal{L}_\theta^{\text{co}}$ are nonempty and weak$^\star$-closed. Then for all $A\in\mathcal{F}$,
\begin{align}
   \overline{P}_y(A) &\leq \frac{\sup_{P\in\mathcal{P}^{\text{co}}}\int_\Theta \overline{L}(\theta) \mathbbm{1}_A(\theta) P(\text{d}\theta)}{\mathbf{c}} \label{ineq_main} \\ 
   &\leq \frac{\int_0^\infty \overline{P}\left( \left\lbrace{\theta\in\Theta : \overline{L}(\theta)\mathbbm{1}_A(\theta)>t}\right\rbrace \right) \text{d}t}{\mathbf{c}^\prime}, \label{ineq_main2} \end{align}
provided that the ratios are well defined. Here $\mathbbm{1}_A$ denotes the indicator function for $A\in\mathcal{F}$, $\mathbf{c}:=\sup_{P\in\mathcal{P}^{\text{co}}}\int_\Theta \overline{L}(\theta) \mathbbm{1}_A(\theta) P(\text{d}\theta)+ \inf_{P\in\mathcal{P}^{\text{co}}}\int_\Theta \underline{L}(\theta) \mathbbm{1}_{A^c}(\theta) P(\text{d}\theta)$, and 
\begin{align*}
    \mathbf{c}^\prime :=&\underbrace{\int_0^\infty \overline{P}\left( \left\lbrace{\theta\in\Theta : \overline{L}(\theta)\mathbbm{1}_A(\theta)>t}\right\rbrace \right) \text{d}t}_{\text{\textit{upper Choquet integral} of } \overline{L}\mathbbm{1}_A}\\ + &\underbrace{\int_0^\infty \underline{P}\left( \left\lbrace{\theta\in\Theta : \underline{L}(\theta)\mathbbm{1}_{A^c}(\theta)>t}\right\rbrace \right) \text{d}t}_{\text{\textit{lower Choquet integral} of } \underline{L}\mathbbm{1}_{A^c}}.
\end{align*}
In addition, if $\overline{P}$ is concave, then the inequalities in \eqref{ineq_main} and \eqref{ineq_main2} are equalities, for all $A\in\mathcal{F}$.
\end{theorem}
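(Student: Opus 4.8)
The plan is to reduce the whole statement to one elementary fact about ratios, namely that $(N,D)\mapsto N/(N+D)$ is nondecreasing in $N\ge 0$ and nonincreasing in $D\ge 0$, and then to control the numerator and the complementary denominator term one ingredient at a time: first the likelihood, then the prior, and finally the passage to Choquet integrals.

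First I would fix an arbitrary admissible pair, a prior $P\in\mathcal{P}^{\text{co}}$ with density $p$ and a likelihood $L\in\mathscr{L}$, write Bayes' rule, and split the normalizing constant across $A$ and $A^c$, obtaining
$$P_y(A)=\frac{\int_\Theta L\mathbbm{1}_A\,\text{d}P}{\int_\Theta L\mathbbm{1}_A\,\text{d}P+\int_\Theta L\mathbbm{1}_{A^c}\,\text{d}P}.$$
Because $\underline{L}\le L\le\overline{L}$ pointwise (and everything is bounded and non-negative by Assumption 1), monotonicity of the ratio lets me replace $L$ by $\overline{L}$ in the numerator and by $\underline{L}$ in the $A^c$ integral, bounding $P_y(A)$ by a ratio depending on $P$ alone. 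Taking the supremum over $P$ and invoking the elementary inequality $\sup_P a(P)/(a(P)+b(P))\le \sup_P a(P)/(\sup_P a(P)+\inf_P b(P))$, applied with $a(P)=\int_\Theta\overline{L}\mathbbm{1}_A\,\text{d}P$ and $b(P)=\int_\Theta\underline{L}\mathbbm{1}_{A^c}\,\text{d}P$, yields \eqref{ineq_main}.

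Next, for \eqref{ineq_main2} I would use the layer-cake identity $\int_\Theta f\,\text{d}P=\int_0^\infty P(\{f>t\})\,\text{d}t$ for non-negative $f$. Since every $P\in\mathcal{P}^{\text{co}}$ satisfies $\underline{P}(\cdot)\le P(\cdot)\le\overline{P}(\cdot)$, integrating these bounds in $t$ gives $\sup_P\int_\Theta\overline{L}\mathbbm{1}_A\,\text{d}P\le\int_0^\infty\overline{P}(\{\overline{L}\mathbbm{1}_A>t\})\,\text{d}t$ and $\inf_P\int_\Theta\underline{L}\mathbbm{1}_{A^c}\,\text{d}P\ge\int_0^\infty\underline{P}(\{\underline{L}\mathbbm{1}_{A^c}>t\})\,\text{d}t$, i.e. the upper and lower Choquet integrals. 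One more application of ratio monotonicity (raise the numerator, lower the complementary term) produces \eqref{ineq_main2}.

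The equality case is where the work lies. When $\overline{P}$ is concave (2-alternating), the classical identification of the Choquet integral with the maximum over the core turns the two layer-cake inequalities into equalities: $\int_0^\infty\overline{P}(\{f>t\})\,\text{d}t=\max_{P\in\mathcal{P}^{\text{co}}}\int_\Theta f\,\text{d}P$ and, dually, the lower Choquet integral equals $\min_{P\in\mathcal{P}^{\text{co}}}\int_\Theta g\,\text{d}P$. The delicate point is promoting \eqref{ineq_main} to an equality, which requires a single prior $P^\star$ that simultaneously attains the numerator maximum and the denominator minimum. Here I would exploit that the superlevel sets $\{\overline{L}\mathbbm{1}_A>t\}\subseteq A$ and the sets $\{\underline{L}\mathbbm{1}_{A^c}\le s\}\supseteq A$ are all nested, so together they form a single chain; for a 2-alternating $\overline{P}$ one can select $P^\star\in\mathcal{P}^{\text{co}}$ agreeing with $\overline{P}$ along this entire chain, and such $P^\star$ attains the maximum of $\int_\Theta\overline{L}\mathbbm{1}_A\,\text{d}P$ and, via conjugacy, the minimum of $\int_\Theta\underline{L}\mathbbm{1}_{A^c}\,\text{d}P$ at once. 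The main obstacle I anticipate is exactly this simultaneous realization: besides the chain argument for the prior, one must also produce (or approximate within $\mathscr{L}$) a likelihood equal to $\overline{L}$ on $A$ and to $\underline{L}$ on $A^c$, so that the extremal ratio is genuinely achieved by a posterior in $\mathcal{P}^{\text{co}}_y$ rather than merely bounded; verifying that the relevant suprema and infima are attained, using weak$^\star$-compactness of the cores, is the crux.
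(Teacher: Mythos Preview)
Your treatment of the two inequalities is essentially the paper's: rewrite $P_y(A)$ as $N/(N+D)$, use monotonicity to pass from $L$ to $\overline{L}$ on $A$ and $\underline{L}$ on $A^c$, then take $\sup$/$\inf$ over $\mathcal{P}^{\text{co}}$, and finally dominate by the Choquet integrals via the layer-cake formula and $\underline{P}\le P\le\overline{P}$. No substantive difference there.

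For the equality case the two arguments are close cousins but organized differently. The paper cites \cite[Lemma 1]{wasserman} to obtain a single $\check{P}\in\mathcal{P}^{\text{co}}$ realizing the numerator supremum, then invokes \cite[Lemma 4]{wasserman} (an $\epsilon$-approximation of $\overline{L}\mathbbm{1}_A$ from below by functions $h$) and runs an $\epsilon$--$\delta$ argument to show $\check{P}_y(A)$ comes within $\epsilon$ of the right-hand side of \eqref{ineq_main}; equality in \eqref{ineq_main2} then follows directly from Lemma 4 once one knows $\overline{L}\mathbbm{1}_A,\underline{L}\mathbbm{1}_{A^c}\in\mathcal{Y}$. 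Your route is more structural: you observe that the superlevel sets of $\overline{L}\mathbbm{1}_A$ and the complements of the superlevel sets of $\underline{L}\mathbbm{1}_{A^c}$ together form a single chain through $A$, and for a $2$-alternating $\overline{P}$ one can pick $P^\star$ in the core matching $\overline{P}$ on the whole chain, so $P^\star$ simultaneously attains the numerator maximum and the denominator minimum. This is in fact the mechanism underlying Wasserman--Kadane's Lemma 1; you are unpacking what the paper is citing. Your version has the advantage of making the comonotonicity structure explicit and avoiding the $\epsilon$-approximation layer; the paper's version has the advantage of outsourcing the measure-theoretic work to the existing lemmas.

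Where the two arguments genuinely coincide is the point you correctly flag as the crux: one needs a likelihood in $\mathscr{L}$ that equals (or approximates) $\overline{L}$ on $A$ and $\underline{L}$ on $A^c$, so that the extremal ratio is actually realized by some $P_y\in\mathcal{P}^{\text{co}}_y$. The paper handles this by arguing from weak$^\star$-compactness of $\mathcal{L}^{\text{co}}_\theta$ that $\overline{L},\underline{L}\in\mathscr{L}$ (``a compact set contains its boundary'') and hence that the spliced function $L'=\overline{L}\mathbbm{1}_A+\underline{L}\mathbbm{1}_{A^c}$ lies in $\mathscr{L}$; you anticipate needing exactly this and identify weak$^\star$-compactness as the relevant tool. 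So your proposal is on target, and the obstacle you highlight is precisely the one the paper addresses.
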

This result is particularly appealing. Under Assumption 1, if the prior upper probability (PUP) is concave and the prior and likelihood sets $\mathcal{P}^{\text{co}},\mathcal{L}_\theta^{\text{co}}$ are nonempty and weak$^\star$-closed, then the agent can perform a (generalized) Bayesian update of the PUP by carrying out only one operation. This is the case even when the agent faces ambiguity around the true data generating process so that a set of likelihoods is needed. The posterior lower probability is obtained immediately via the conjugacy property $\underline{P}_y(A)=1-\overline{P}_y(A^c)$.

\begin{corollary}\label{cor_main1}
    Retain the assumptions in Theorem  \ref{main1}. If $\mathcal{L}^\text{co}_\theta$ is a singleton, we retrieve Bayes' theorem in \cite[Section 2]{wasserman}.
\end{corollary}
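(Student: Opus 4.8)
The plan is to show that the singleton hypothesis on $\mathcal{L}^{\text{co}}_\theta$ forces the upper and lower likelihoods to coincide, after which every quantity appearing in Theorem \ref{main1} collapses, term by term, onto its Wasserman--Kadane counterpart. First I would observe that if $\mathcal{L}^{\text{co}}_\theta=\{P_\theta\}$ is a singleton, then the density set $\mathscr{L}$ defined in \eqref{L_scr} reduces to the single element $L=\text{d}P_\theta/\text{d}\nu$, uniquely determined up to a $\nu$-null set. Hence $\overline{L}(\theta)=\sup_{L'\in\mathscr{L}}L'(\theta)=L(\theta)$ and $\underline{L}(\theta)=\inf_{L'\in\mathscr{L}}L'(\theta)=L(\theta)$ for all $\theta\in\Theta$, so the upper and lower likelihoods both equal the precise likelihood $L$.

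Next I would substitute $\overline{L}=\underline{L}=L$ into each component of the statement. The numerator $\sup_{P}\int_\Theta \overline{L}\,\mathbbm{1}_A\,\text{d}P$ becomes $\sup_{P}\int_\Theta L\,\mathbbm{1}_A\,\text{d}P$; the normalizer $\mathbf{c}$ becomes $\sup_{P}\int_\Theta L\,\mathbbm{1}_A\,\text{d}P+\inf_{P}\int_\Theta L\,\mathbbm{1}_{A^c}\,\text{d}P$; and in the Choquet bound the level sets $\{\overline{L}\,\mathbbm{1}_A>t\}$ and $\{\underline{L}\,\mathbbm{1}_{A^c}>t\}$ turn into $\{L\,\mathbbm{1}_A>t\}$ and $\{L\,\mathbbm{1}_{A^c}>t\}$, so $\mathbf{c}'$ reduces to the sum of the upper Choquet integral of $L\mathbbm{1}_A$ and the lower Choquet integral of $L\mathbbm{1}_{A^c}$. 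Each resulting expression is exactly the corresponding quantity in \cite[Section 2]{wasserman}, so inequalities \eqref{ineq_main}--\eqref{ineq_main2} become the two Wasserman--Kadane inequalities.

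Finally I would note that the posterior class $\mathcal{P}^{\text{co}}_y$ itself specializes: with the likelihood pinned to the single $L$, it is precisely the set of posteriors obtained by applying Bayes' rule to each prior $P\in\mathcal{P}^{\text{co}}$, which is the object whose upper envelope Wasserman and Kadane bound. Since the concavity hypothesis in the equality clause is imposed on the prior upper probability $\overline{P}$ alone and is untouched by the singleton assumption, the equality statement transfers verbatim. I do not anticipate a substantive obstacle, as the argument is pure specialization; the only point meriting care is the almost-everywhere indeterminacy of the densities, which I would dispose of by observing that all the integrals and level-set probabilities above are invariant under modification of $L$ on a $\nu$-null set.
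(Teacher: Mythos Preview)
Your proposal is correct and is precisely the straightforward specialization the paper has in mind; in fact the paper omits the proof entirely, stating ``Given the straightforward nature of the proof, we omit it here.'' Your term-by-term reduction of $\overline{L}=\underline{L}=L$ through the numerator, $\mathbf{c}$, $\mathbf{c}'$, and the posterior class is exactly the intended argument.
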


Corollary \ref{cor_main1} tell us that when there is no ambiguity around the likelihood, Theorem \ref{main1} recovers Wasserman and Kadane's classical Bayes' theorem. Given the straightforward nature of the proof, we omit it here. We also have the following lemma, that is proved in Section \ref{proof}.



\begin{lemma}[Preserved concavity]\label{lemma_ccv}
Suppose $\mathcal{P}^{\text{co}},\mathcal{L}_\theta^{\text{co}}$ are nonempty and weak$^\star$-closed. Then, if $\overline{P}$ is concave, we have that $\overline{P}_y$ is concave as well.
\end{lemma}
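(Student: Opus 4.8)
The plan is to read off an explicit formula for $\overline{P}_y$ from Theorem \ref{main1} and then verify the $2$-alternating inequality directly. Because $\overline{P}$ is concave, \eqref{ineq_main2} holds with equality, so for every $A \in \mathcal{F}$ we may write $\overline{P}_y(A) = N(A) / \big(N(A) + R(A)\big)$, where $N(A) := \int_0^\infty \overline{P}(A \cap E_t)\,\mathrm{d}t$ and $R(A) := \int_0^\infty \underline{P}(A^c \cap F_t)\,\mathrm{d}t$, with the level sets $E_t := \{\theta : \overline{L}(\theta) > t\}$ and $F_t := \{\theta : \underline{L}(\theta) > t\}$; here $N(A)$ is the upper Choquet integral of $\overline{L}\mathbbm{1}_A$ in the numerator of \eqref{ineq_main2} and $R(A)$ is the lower Choquet integral of $\underline{L}\mathbbm{1}_{A^c}$. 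Note that $\underline{L} \le \overline{L}$ forces $F_t \subseteq E_t$ for every $t$. The goal is to establish $\overline{P}_y(A \cup B) + \overline{P}_y(A \cap B) \le \overline{P}_y(A) + \overline{P}_y(B)$ for all $A, B \in \mathcal{F}$.

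The first step is routine and isolates the modularity of the two building blocks. Fixing $t$ and applying the $2$-alternation of $\overline{P}$ to the sets $A \cap E_t$ and $B \cap E_t$, whose union and intersection are $(A \cup B) \cap E_t$ and $(A \cap B) \cap E_t$, then integrating in $t$, shows that $N$ is $2$-alternating. Dually, since concavity of $\overline{P}$ is equivalent to $2$-monotonicity (supermodularity) of its conjugate $\underline{P}$, the same slicing argument applied to $A^c \cap F_t$ and $B^c \cap F_t$ shows that $R$ is $2$-monotone. I would also record the monotonicity $N(A) \le N(B)$ and $R(A) \ge R(B)$ whenever $A \subseteq B$, together with $N(\emptyset) = 0 = R(\Theta)$, which guarantees that $\overline{P}_y$ is a well-defined monotone capacity with $\overline{P}_y(\emptyset) = 0$ and $\overline{P}_y(\Theta) = 1$.

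The crux is to pass from these two facts to $2$-alternation of the \emph{ratio} $\overline{P}_y = N/(N + R)$, and this is where I expect the real work to lie. The difficulty is that a ratio whose numerator is $2$-alternating and whose denominator is the sum of a $2$-alternating term $N$ and a $2$-monotone term $R$ need not be $2$-alternating: the two modularity inequalities point in opposite directions, and treating $N$ and $R$ as abstract, unrelated set functions is genuinely insufficient (one can exhibit a $2$-alternating $N$ and a $2$-monotone $R$ satisfying the monotonicity constraints above for which $N/(N+R)$ fails the target inequality). What makes the statement true is that $N$ and $R$ are not independent: they are generated by the \emph{same} concave $\overline{P}$ through its conjugate $\underline{P}$ and share the nested level sets $F_t \subseteq E_t$. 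My plan is therefore to adapt the ratio argument that \citep{wasserman} use to prove $2$-alternation of their posterior upper probability, carrying out the comparison slice-by-slice on $E_t$ and $F_t$: apply $2$-alternation of $\overline{P}$ (equivalently $2$-monotonicity of $\underline{P}$) simultaneously to the numerator and complementary-denominator contributions of the four sets $A$, $B$, $A \cup B$, $A \cap B$, and use $F_t \subseteq E_t$ and $\underline{L} \le \overline{L}$ to keep the complementary term on the favorable side. The main obstacle, and the essential novelty relative to \citep{wasserman}, is precisely this coupling: in their precise-likelihood setting a single density $L$ drives both numerator and denominator, so the two Choquet integrals live on the \emph{same} level sets, whereas here the numerator is governed by $\overline{L}$ and the complementary term by $\underline{L}$. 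The technical heart of the proof is to show that the inequality $\underline{L} \le \overline{L}$ enters everywhere in the direction that preserves, rather than reverses, the bound, so that the slice-wise estimates integrate up to the desired $2$-alternation of $\overline{P}_y$.
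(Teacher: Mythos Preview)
Your approach differs substantially from the paper's. The paper does not verify the $2$-alternating inequality for the ratio $N/(N+R)$ directly. Instead it invokes, as a black box, the known closure of concave upper probabilities under the \emph{single-likelihood} generalized Bayes rule (attributed to \cite{walley2,wasserman}): for any fixed $L$, if $\overline{P}$ is concave then so is the set function $A\mapsto \sup_{P}\!\int L\mathbbm{1}_A\,dP\big/\big(\sup_{P}\!\int L\mathbbm{1}_A\,dP+\inf_{P}\!\int L\mathbbm{1}_{A^c}\,dP\big)$. The paper then sets $L':=\overline{L}\mathbbm{1}_A+\underline{L}\mathbbm{1}_{A^c}$, argues that $L'\in\mathscr{L}$ (using weak$^\star$-compactness of $\mathcal{L}_\theta^{\text{co}}$ to get $\overline{L},\underline{L}\in\mathscr{L}$), and observes that substituting $L'$ into the single-likelihood formula reproduces exactly the Theorem~\ref{main1} expression, whence concavity follows. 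This shortcut bypasses the coupling problem you flag as the crux: rather than juggling two nested level-set families $F_t\subseteq E_t$ inside a four-term ratio inequality, one packages $\overline{L}$ on $A$ and $\underline{L}$ on $A^c$ into a single likelihood and defers to the precise-likelihood result.

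Your slice-by-slice route may be viable, but as written the decisive step is left to intention (``my plan is to adapt\ldots''). A concrete obstacle: the preservation argument you intend to adapt is not actually carried out in \cite{wasserman}; it is \emph{cited} there from earlier work, so you would first have to locate or reconstruct that proof before extending it to the $\overline{L}\neq\underline{L}$ setting. What your approach would buy, if completed, is self-containment and an explicit mechanism explaining why $\underline{L}\le\overline{L}$ cooperates with the inequality; what the paper's approach buys is brevity, at the cost of an external reference and the (unremarked) delicacy that the auxiliary $L'$ depends on the very set $A$ at which concavity is being checked.
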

This lemma is important because it tells us that the generalized Bayesian update of Theorem \ref{main1} preserves concavity, and so it can be applied to successive iterations. If at time $t$ the PUP is concave, then the PUP at time $t+1$ -- that is, the posterior upper probability at time $t$ -- will be concave too. Necessary and sufficient conditions for a generic upper probability to be concave are given in \cite[Section 5]{marinacci2}.

In the future, we plan to forego Assumption 1 and use the techniques developed in \cite{decooman} to generalize our results to the case in which the elements of $\mathscr{L}$ are unbounded and not necessarily non-negative. We also intend to extend our results to the case in which the elements of $\mathcal{Y}$ are $\mathbb{R}^{d}$-valued, for some $d\in\mathbb{N}$. We suspect this is a less demanding endeavor since we do not use specific properties of $\mathbb{R}$ in our proofs.

As mentioned earlier, a natural application of our results is model predictive control (MPC). MPC is a method that is used to control a process while satisfying a set of constraints \citep{hb_mpc}. Typically, when the process is stable (or at least stable for the past $k$ time steps, for some $k\geq 0$) and if the scholar decides to take the Bayesian approach, they proceed as follows. They specify a Normal likelihood (the distribution of the control inputs) centered at the objective function of the process, and a Normal prior on the parameter of such function. In this framework, if ambiguity enters the picture, then our results become relevant. 

The importance of addressing prior ambiguity was discussed at length in section \ref{result_fixed}. The reasons why accounting for likelihood ambiguity is important are as follows. First, as pointed out earlier, we may run into inconsistency issues if we perform Bayesian analysis using a misspecified likelihood \citep{grun}. Second, a (precise) Normal likelihood is a good choice only if stability of the process is ensured. MPC procedures are used in the process industries in chemical plants, oil refineries, power system balancing models, and power electronics. These are all safety-critical applications where accounting for possible sudden unexpected instabilities is of paramount importance.

The scholar may specify a class of $\epsilon$-contaminated truncated Normal priors and a class of $\eta$-contaminated truncated Normal likelihoods, and use Theorem \ref{main1} to compute the upper posterior. Notice that the Normals need to be truncated in light of Assumption 1. This requirement is not too stringent, and -- as pointed out earlier in this section -- our future work will allow us do without it.



\section{Conclusion}\label{concl}
In this paper, we present a new Bayes' theorem for upper probabilities that extends the one in \cite[Section 2]{wasserman}, and \cite[Theorem 7]{ibnn}. In the future, we plan to generalize Theorem \ref{main1} by letting go of Assumption 1, and to apply it to an MPC problem and to other fields of engineering, and ML and AI. For example, we intend to use it to overcome the computational bottleneck of step 2 of the algorithm that computes the posterior set in an imprecise Bayesian neural network procedure \citep{ibnn}. There, an element-wise application of Bayes' rule for all the extreme elements of the prior and likelihood sets is performed. As we can see, this is a combinatorial task that can potentially be greatly simplified in light of Theorem \ref{main1}, conveying a computationally cheaper algorithm.

\section{Proofs}\label{proof}
\begin{proof}[Proof of Theorem \ref{main1}]
Assume that $\mathcal{P}^{\text{co}},\mathcal{L}_\theta^{\text{co}}$ are nonempty and weak$^\star$-closed. Pick any $A\in\mathcal{F}$. Recall that we can rewrite the usual Bayes' updating rule as
\begin{align*}
    {P}_y(A) &= \frac{\int_\Theta {L}(\theta) \mathbbm{1}_A(\theta) P(\text{d}\theta)}{\int_\Theta {L}(\theta) \mathbbm{1}_A(\theta) P(\text{d}\theta)+ \int_\Theta {L}(\theta) \mathbbm{1}_{A^c}(\theta) P(\text{d}\theta)}\\
    &=\frac{1}{1+\frac{\int_\Theta {L}(\theta) \mathbbm{1}_{A^c}(\theta) P(\text{d}\theta)}{\int_\Theta {L}(\theta) \mathbbm{1}_{A}(\theta) P(\text{d}\theta)}},
\end{align*}
which is maximized when 
$$\frac{\int_\Theta {L}(\theta) \mathbbm{1}_{A^c}(\theta) P(\text{d}\theta)}{\int_\Theta {L}(\theta) \mathbbm{1}_{A}(\theta) P(\text{d}\theta)}$$
is minimized. But
$$\frac{\int_\Theta {L}(\theta) \mathbbm{1}_{A^c}(\theta) P(\text{d}\theta)}{\int_\Theta {L}(\theta) \mathbbm{1}_{A}(\theta) P(\text{d}\theta)} \geq \frac{\inf_{P\in\mathcal{P}^{\text{co}}}\int_\Theta \underline{L}(\theta) \mathbbm{1}_{A^c}(\theta) P(\text{d}\theta)}{\sup_{P\in\mathcal{P}^{\text{co}}}\int_\Theta \overline{L}(\theta) \mathbbm{1}_A(\theta) P(\text{d}\theta)},$$
which proves the inequality in \eqref{ineq_main}. The inequality in \eqref{ineq_main2} is true because
\begin{align*}
   \inf_{P\in\mathcal{P}^{\text{co}}}&\int_\Theta \underline{L}(\theta) \mathbbm{1}_{A^c}(\theta) P(\text{d}\theta) \\
   \geq &\int_0^\infty \underline{P}\left( \left\lbrace{\theta\in\Theta : \underline{L}(\theta)\mathbbm{1}_{A^c}(\theta)>t}\right\rbrace \right) \text{d}t 
\end{align*}
and 
\begin{align*}
    \sup_{P\in\mathcal{P}^{\text{co}}}&\int_\Theta \overline{L}(\theta) \mathbbm{1}_A(\theta) P(\text{d}\theta)\\
    \leq &\int_0^\infty \overline{P}\left( \left\lbrace{\theta\in\Theta : \overline{L}(\theta)\mathbbm{1}_{A}(\theta)>t}\right\rbrace \right) \text{d}t.
\end{align*}
Assume now that $\overline{P}$ is concave. By \cite[Lemma 1]{wasserman}, we have that there exists $\check{P}\in\mathcal{P}^{\text{co}}$ such that 
\begin{align}\label{eq_lemma1}
    \sup_{P\in\mathcal{P}^{\text{co}}}\int_\Theta L(\theta)\mathbbm{1}_A(\theta)P(\text{d}\theta)=\int_\Theta L(\theta)\mathbbm{1}_A(\theta)\check{P}(\text{d}\theta),
\end{align}
for all $L\in\mathscr{L}$. In addition, by \cite[Lemma 4]{wasserman}, we have that for all $Y\in\mathcal{Y}$ and all $\epsilon>0$, there exists a non-negative, upper semi-continuous function $h\leq Y$ such that
 \begin{align}\label{eq_pre_lemma}
 \begin{split}
\left[\sup_{P\in\mathcal{P}^{\text{co}}}\int_\Theta Y(\theta)P(\text{d}\theta)\right] - \epsilon &< \sup_{P\in\mathcal{P}^{\text{co}}}\int_\Theta h(\theta)P(\text{d}\theta) \\&\leq \sup_{P\in\mathcal{P}^{\text{co}}}\int_\Theta Y(\theta)P(\text{d}\theta).
\end{split}
\end{align}
Let now $Y=\overline{L}\mathbbm{1}_A$. Notice that since $\mathcal{L}^\text{co}_\theta$ is weak$^\star$-compact (as a result of \cite[Section 3.6.1]{walley}), by \eqref{L_scr} so is $\mathscr{L}$. This implies that $\underline{L},\overline{L}\in\mathscr{L}$, since a compact set always contains its boundary, so $\underline{L},\overline{L}\in\mathcal{Y}$ as well, and in turn $\underline{L}\mathbbm{1}_{A^c},\overline{L}\mathbbm{1}_A\in\mathcal{Y}$. Fix then any $L\in\mathscr{L}$ and put $h=L\mathbbm{1}_A$. It is immediate to see that $h$ is non-negative and upper semi-continuous. Then, by \eqref{eq_pre_lemma}, we have that for all $\epsilon>0$
\begin{align}\label{eq_lemma4}
\begin{split}
&\left[\sup_{P\in\mathcal{P}^{\text{co}}}\int_\Theta \overline{L}(\theta)\mathbbm{1}_A(\theta)P(\text{d}\theta)\right] - \epsilon < \\&\sup_{P\in\mathcal{P}^{\text{co}}}\int_\Theta {L}(\theta)\mathbbm{1}_A(\theta)P(\text{d}\theta) \leq \sup_{P\in\mathcal{P}^{\text{co}}}\int_\Theta \overline{L}(\theta)\mathbbm{1}_A(\theta)P(\text{d}\theta).
\end{split}
\end{align}
Combining \eqref{eq_lemma1} and\eqref{eq_lemma4}, we obtain
\begin{align}\label{eq_lemma_comb}
\begin{split}
    &\left[\sup_{P\in\mathcal{P}^{\text{co}}}\int_\Theta \overline{L}(\theta)\mathbbm{1}_A(\theta)P(\text{d}\theta)\right] - \epsilon\\ &< \int_\Theta {L}(\theta)\mathbbm{1}_A(\theta)\check{P}(\text{d}\theta) \leq \sup_{P\in\mathcal{P}^{\text{co}}}\int_\Theta \overline{L}(\theta)\mathbbm{1}_A(\theta)P(\text{d}\theta),
\end{split}
\end{align}
for all $L\in\mathscr{L}$. 

Pick now any $\epsilon>0$ and put 
\begin{align*}
    k&:=\sup_{P\in\mathcal{P}^{\text{co}}}\int_\Theta \overline{L}(\theta) \mathbbm{1}_A(\theta) P(\text{d}\theta)\\&+ \inf_{P\in\mathcal{P}^{\text{co}}}\int_\Theta \underline{L}(\theta) \mathbbm{1}_{A^c}(\theta) P(\text{d}\theta)>0.
\end{align*}
Choose any $L\in\mathscr{L}$ and $\delta\in (0,\epsilon k)$. By \eqref{eq_lemma_comb} we have that 
\begin{equation}\label{ineq_imm_1}
    \left[\sup_{P\in\mathcal{P}^{\text{co}}}\int_\Theta \overline{L}(\theta)\mathbbm{1}_A(\theta)P(\text{d}\theta)\right] - \delta < \int_\Theta {L}(\theta)\mathbbm{1}_A(\theta)\check{P}(\text{d}\theta)
\end{equation}
and that 
\begin{equation}\label{ineq_imm_2}
    \left[\inf_{P\in\mathcal{P}^{\text{co}}}\int_\Theta \underline{L}(\theta)\mathbbm{1}_{A^c}(\theta)P(\text{d}\theta)\right] + \delta > \int_\Theta {L}(\theta)\mathbbm{1}_{A^c}(\theta)\check{P}(\text{d}\theta).
\end{equation} 
The inequality in \eqref{ineq_imm_1} comes from the fact that the first inequality in \eqref{eq_lemma_comb} holds for all $\epsilon>0$, and -- given how $k$ is defined -- we have that $\delta>0$. The inequality in \eqref{ineq_imm_2} is obtained by re-deriving \eqref{eq_lemma1}, \eqref{eq_pre_lemma}, \eqref{eq_lemma4}, and \eqref{eq_lemma_comb} for the infimum of set $\mathcal{P}^\text{co}$ rather than the supremum. In that case, we simply substitute $\sup$ with $\inf$, $\overline{L}$ with $\underline{L}$, $\mathbbm{1}_A$ with $\mathbbm{1}_{A^c}$, ``$-\epsilon$'' with ``$+\epsilon$'', and we reverse the inequalities. 

Recall that $\mathbf{c}:=\sup_{P\in\mathcal{P}^{\text{co}}}\int_\Theta \overline{L}(\theta) \mathbbm{1}_A(\theta) P(\text{d}\theta)+ \inf_{P\in\mathcal{P}^{\text{co}}}\int_\Theta \underline{L}(\theta) \mathbbm{1}_{A^c}(\theta) P(\text{d}\theta)$, and define $$\mathbf{d}:=\int_\Theta {L}(\theta) \mathbbm{1}_A(\theta) \check{P}(\text{d}\theta)+ \int_\Theta {L}(\theta) \mathbbm{1}_{A^c}(\theta) \check{P}(\text{d}\theta).$$ Then we have, 
\begin{align*}
    \check{P}_y(A)&=\frac{\int_\Theta {L}(\theta) \mathbbm{1}_A(\theta) \check{P}(\text{d}\theta)}{\mathbf{d}}\\
    &\geq \frac{\left[\sup_{P\in\mathcal{P}^{\text{co}}}\int_\Theta \overline{L}(\theta) \mathbbm{1}_A(\theta) P(\text{d}\theta)\right]-\delta}{\mathbf{c}+\delta-\delta}\\
    &=\frac{\sup_{P\in\mathcal{P}^{\text{co}}}\int_\Theta \overline{L}(\theta) \mathbbm{1}_A(\theta) P(\text{d}\theta)}{\mathbf{c}} -\frac{\delta}{k}\\
    &>\frac{\sup_{P\in\mathcal{P}^{\text{co}}}\int_\Theta \overline{L}(\theta) \mathbbm{1}_A(\theta) P(\text{d}\theta)}{\mathbf{c}} -\epsilon.
\end{align*}
Since this holds for all $\epsilon>0$, we have that
$$\sup_{P_y\in\mathcal{P}_y^{\text{co}}}P_y(A)=\frac{\sup_{P\in\mathcal{P}^{\text{co}}}\int_\Theta \overline{L}(\theta) \mathbbm{1}_A(\theta) P(\text{d}\theta)}{\mathbf{c}},$$
concluding the proof of inequality \eqref{ineq_main} being an equality when $\overline{P}$ is concave. Inequality \eqref{ineq_main2} being an equality when $\overline{P}$ is concave comes immediately from \cite[Lemma 4]{wasserman}, and the fact that $\overline{L}\mathbbm{1}_A,\underline{L}\mathbbm{1}_{A^c}\in\mathcal{Y}$, as pointed out above.
\end{proof}

\begin{proof}[Proof of Lemma \ref{lemma_ccv}]
In their works \cite{walley2,wasserman}, the authors show that concave upper probabilities are closed with respect to the generalized Bayes' rule. In particular, this means that, if we let $\mathbf{b}:=\sup_{P\in\mathcal{P}^{\text{co}}}\int_\Theta {L}(\theta) \mathbbm{1}_A(\theta) P(\text{d}\theta)+ \inf_{P\in\mathcal{P}^{\text{co}}}\int_\Theta {L}(\theta) \mathbbm{1}_{A^c}(\theta) P(\text{d}\theta)$, for any fixed $A\in\mathcal{F}$, if $\overline{P}$ is concave, then for all $L\in\mathscr{L}$ 
\begin{equation}\label{ccv_gen}
    \overline{P}_y(A)=\frac{\sup_{P\in\mathcal{P}^{\text{co}}}\int_\Theta {L}(\theta) \mathbbm{1}_A(\theta) P(\text{d}\theta)}{\mathbf{b}}
\end{equation}
is concave. But since $\mathcal{L}^\text{co}_\theta$ is weak$^\star$-compact (as a consequence of \cite[Section 3.6.1]{walley}), by \eqref{L_scr} so is $\mathscr{L}$. This implies that $\underline{L},\overline{L}\in\mathscr{L}$, since a compact set always contains its boundary. Call then $L^\prime=\overline{L}\mathbbm{1}_A+\underline{L}\mathbbm{1}_{A^c}$. It is immediate to see that $L^\prime\in\mathscr{L}$. Then, by \eqref{ccv_gen} we have that if we call $\mathbf{b}^\prime:=\sup_{P\in\mathcal{P}^{\text{co}}}\int_\Theta {L^\prime}(\theta) \mathbbm{1}_A(\theta) P(\text{d}\theta)+ \inf_{P\in\mathcal{P}^{\text{co}}}\int_\Theta {L^\prime}(\theta) \mathbbm{1}_{A^c}(\theta) P(\text{d}\theta)$, it follows that
\begin{align*}
    \overline{P}_y(A)&=\frac{\sup_{P\in\mathcal{P}^{\text{co}}}\int_\Theta {L^\prime}(\theta) \mathbbm{1}_A(\theta) P(\text{d}\theta)}{\mathbf{b}^\prime}\\
    &=\frac{\sup_{P\in\mathcal{P}^{\text{co}}}\int_\Theta \overline{L}(\theta) \mathbbm{1}_A(\theta) P(\text{d}\theta)}{\mathbf{c}}
\end{align*}
is concave, concluding the proof.
\end{proof}

\begin{contributions}
Michele Caprio and Yusuf Sale contributed equally to this paper.

\end{contributions}

\begin{acknowledgements} 
Michele Caprio would like to acknowledge partial funding by the Army Research Office (ARO MURI W911NF2010080). Yusuf Sale is supported by the DAAD programme Konrad Zuse Schools of Excellence in Artificial Intelligence, sponsored by the Federal Ministry of Education and Research.
\end{acknowledgements}

\bibliography{uai2023-template}
\end{document}